\newtheorem{proposition}{Proposition}
\newcommand{\xdim}{d}
\newcommand{\nsamples}{n}
\newcommand{\nbins}{k}
\newcommand{\distweight}{p}
\newcommand{\wvec}{\mathbf{w}}
\newcommand{\xvec}{\mathbf{x}}
\newcommand{\phivec}{\boldsymbol{\phi}}
\newcommand{\thetavec}{{\boldsymbol{\theta}}}
\newcommand{\RR}{\mathbb{R}}
\newcommand{\myparagraph}[1]{\vspace{0.01cm}
\noindent \textbf{#1} \hspace{0.2cm}}
\newcommand*{\argmin}{\mathop{\mathrm{argmin}}}
\newcommand{\inlinevec}[2]{%
  \ensuremath{\Bigl[\negthinspace\begin{smallmatrix}#1\\#2\end{smallmatrix}\Bigr]}}
\definecolor{beaublue}{rgb}{0.74, 0.83, 0.9}
\definecolor{aliceblue}{rgb}{0.94, 0.97, 1.0}
\icmltitlerunning{Improving Regression Performance with Distributional Losses} 
\begin{document}

\twocolumn[
\icmltitle{Improving Regression Performance with Distributional Losses}

\begin{icmlauthorlist}
\icmlauthor{Ehsan Imani}{uofa}
\icmlauthor{Martha White}{uofa}
\end{icmlauthorlist}
\icmlaffiliation{uofa}{Department of Computing Science, University of Alberta, Edmonton}
\icmlcorrespondingauthor{Martha White}{whitem@ualberta.ca}

\icmlkeywords{regression,KL-Divergence}

\vskip 0.2in
]

\printAffiliationsAndNotice{}

\begin{abstract} 
There is growing evidence that converting targets to soft targets in supervised learning can provide considerable gains in performance. Much of this work has considered classification, converting hard zero-one values to soft labels---such as by adding label noise, incorporating label ambiguity or using distillation. In parallel, there is some evidence from a regression setting in reinforcement learning that learning distributions can improve performance. In this work, we investigate the reasons for this improvement, in a regression setting. We introduce a novel distributional regression loss, and similarly find it significantly improves prediction accuracy. We investigate several common hypotheses, around reducing overfitting and improved representations. We instead find evidence for an alternative hypothesis: this loss is easier to optimize, with better behaved gradients, resulting in improved generalization. We provide theoretical support for this alternative hypothesis, by characterizing the norm of the gradients of this loss. 
\end{abstract} 
%
\section{Introduction}

The choice of problem formulation for regression has a large impact on prediction performance on new data---generalization performance. 
There is an extensive literature on problem formulations to promote generalization, including robust losses \citep{huber2011robust,ghosh2017robust,barron2017amore};
proxy losses and reductions between problems \citep{langford2006predicting};
the addition of regularization to impose constraints or preferences on the solution;
 the addition of label noise \citep{szegedy2016rethinking};
 and even ensuring multiple tasks are learned simultaneously, rather than separately, as in multi-task learning \citep{caruana1998multitask}. There is typically a goal in mind---such as classification accuracy or absolute error for regression---but those losses are not necessarily directly minimized. 

In recent years, there has been a particular focus on learning representations with neural networks that generalize better. 
With fixed representations, the loss or problem formulation can only have so much impact, because the learned function is a linear function of inputs. With (deep) neural networks, however, the performance can vary widely, based even on simple modifications such as the initialization \citep{glorot2010understanding}. 
Particularly in classification, modifying the outputs can significantly improve performance.
An extensive empirical study on classification and age prediction \citep{gao2017deep}, under label ambiguity, showed that data augmentation on the label side---putting a distribution over an ambiguous label---significantly improved test accuracy, validated also by other work on age estimation~\citep{rothe2018deep}.
Work on model compression \citep{ba2013do,urban2016do} and distillation \citep{hinton2015distilling} highlight that a smaller student model can be trained to capture the generalization ability of a larger teacher model. In general, there is a growing literature on data augmentation and label smoothing, that advocates for reduced overfitting and improved generalization from modifying the outputs \citep{norouzi2016reward,szegedy2016rethinking,xie2016disturblabel,miyato2016distributional,pereyra2017regularizing} 
and in reinforcement learning where learning distributional outputs, rather than means, improves performance \citep{bellemare2017distributional}.

There has been some work---though considerably less---towards understanding the impact of the properties of the loss that promote effective optimization. 
There is a recent insight that minimizing training time increases generalization performance \citep{hardt2015train}, motivating the design of losses that can be more easily optimized.
Though not the focus in data augmentation, there have been some insights about loss properties. 
\citet{gao2017deep} showed that their data augmentation approach provided a faster convergence rate (see their Figure 8).
\citet{pereyra2017regularizing} showed that label smoothing and their regularizer penalizing confident predictions for classification provided smoother gradient norms than without regularization. 
\citet{bellemare2017distributional} hypothesized that the properties of the KL-divergence could have improved learning performance, in a reinforcement learning setting. 
These papers hint at something deeper occurring with the loss, and motivate investigation into not just the conversion of the problem but into the loss itself. 

In this work, we show that the properties of the loss have a significant effect, and better explain the resulting increase in performance than preventing overfitting. We first propose a new loss for regression, called a Histogram Loss (HL). The targets are converted to a \emph{target distribution}, and the KL-divergence taken between a histogram density and this target distribution. The choice of histogram density provides a relatively flexible \emph{prediction distribution}, that nonetheless enables the KL-divergence to be computed efficiently. The prediction is then the expected value of this histogram density.  
This modification could be seen as converting the problem to a more difficult (multi-task) problem---from one output, to multiple values to represent the distribution---that promotes generalization in the learner and reduces overfitting. We show that instead of this hypothesis,
the (optimization) properties of the HL seem to be the key factor in the resulting improved accuracy. We provide a series of empirical results to support this hypothesis. We also characterize the norm of the gradient of the HL
which directly relates to sample complexity \citep{hardt2015train}. The bounds on the variability of the gradient help explain the positive empirical performance of the HL, and further motivate the use of this loss as an alternative for the standard loss for regression. 

\section{Distributional Losses for Regression}

In this section, we introduce the Histogram Loss (HL), which generalizes beyond special cases of soft-target losses used in recent work \citep{norouzi2016reward,szegedy2016rethinking,gao2017deep}. We first introduce the loss and how it can be used for regression. We then relate it to other objectives, including maximum likelihood for regression and other methods that learn distributions.

\subsection{Learning means and distributions}

In regression, it is common to use the squared-error loss, or $\ell_2$ loss. This corresponds to assuming that the continuous target variable $Y$ is Gaussian distributed, conditioned on inputs $\xvec \in \RR^{\xdim}$: $Y \sim \mathcal{N}(\mu = f(\xvec), \sigma^2)$ for a fixed variance $\sigma^2 > 0$ and some function $f: \RR^\xdim \rightarrow \RR$ on the inputs, such as a linear function $f(\xvec) = \langle \xvec, \wvec \rangle$ for weights $\wvec \in \RR^{\xdim}$. The maximum likelihood function $f$ for $\nsamples$ samples $\{\xvec_i, y_i\}$, corresponds to minimizing the $\ell_2$ loss
\vspace{-0.2cm}
\begin{equation}
\min_{f \in \mathcal{F}} \sum_{j=1}^\nsamples (f(\xvec_j) - y_j)^2
\end{equation}
with prediction $f(\xvec) \approx \mathbb{E}[Y | \xvec]$. 

Alternatively, one could consider learning a distribution over $Y$ directly, and then taking the mean of that distribution---or other statistics---to provide a prediction. This additional difficulty seems hardly worth the effort, considering only the mean is required for prediction. However, as motivated above, the increased difficulty could beneficially prevent overfitting and promote generalization. 

There are many options for learning conditional distributions, $p(y | \xvec)$ even when only considering those that use neural networks \citep{bishop1994mixture,tang2013learning, rothe2015dex,bellemare2017distributional}.
The goal of this work, however, is not to provide another method to learn distributions. Rather, the goal is to benefit from inducing a distribution over $Y$,
even if that distribution will subsequently not be used, other than for computing a mean prediction. In our experiments, we will compare to an approach that learns distributions, but only to evaluate regression performance. 
%

\subsection{The Histogram Loss}

Consider predicting a continuous target $Y$ with event space $\mathcal{Y}$, given inputs $\xvec$. Instead of directly predicting $Y$, we select a \emph{target distribution} on $Y | \xvec$. This target distribution is selected upfront, by us, rather than being learned.  
Suppose the target distribution has support $[a,b]$, pdf $p$, and cdf $F$. We would like to learn a parameterized \emph{prediction distribution} $q_\xvec: \mathcal{Y} \rightarrow [0,1]$, conditioned on $\xvec$, by minimizing a KL-divergence to $p$. For any $p$, however, this may be expensive. Further, depending on the parameterization of the prediction distribution, this may also be potentially non-convex in those parameters. 

We propose to restrict the prediction distribution $q_\xvec$ to be a \emph{histogram density}. 
%
Assume $[a,b]$ has been uniformly partitioned into $\nbins$ bins, of width $w_i$, and let function $f: \mathcal{X} \rightarrow [0,1]^\nbins$ provide $\nbins$-dimensional vector $f(\xvec)$ of the coefficients indicating the probability the target is in that bin, given $\xvec$. The density $q_\xvec$ corresponds to a (normalized) histogram, and has density values $f_i(\xvec)/w_i$ per bin. The KL-divergence between $p$ and $q_{\xvec}$ is
\vspace{-0.2cm}
\begin{equation*}
D_{KL}(p||q_{\xvec}) = h(p,q_{\xvec}) - h(p).
\end{equation*}
The second term is the differential entropy---the extension of entropy to continuous random variables. Because the second term only depends on $p$, the aim is to minimize the first term: the cross-entropy between $p$ and $q_{\xvec}$. 
This loss simplifies, due to the form on $q_{\xvec}$:
\begin{align*}
h(p,q_{\xvec}) &= - \int_{a}^{b} p(y) \log q_{\xvec}(y) dy \\
&= - \sum_{i=1}^\nbins \int_{l_i}^{l_i + w_i} p(y) \log \frac{f_i(\xvec)}{w_i} dy \\
&= - \sum_{i=1}^\nbins \log \frac{f_i(\xvec)}{w_i} \underbrace{(F(l_i + w_i) - F(l_i))}_{\distweight_i}.
\end{align*}
In the minimization, the width itself can be ignored, because $\log \tfrac{f_i(\xvec)}{w_i} = \log f_i(\xvec) - \log w_i$, giving the Histogram Loss 
\begin{equation}
HL(p,q_{\xvec}) = - \sum_{i=1}^\nbins \distweight_i \log f_i(\xvec) \label{eq_hl}
.
\end{equation}
This loss has several useful properties. One important property is that it is convex in $f_i(\xvec)$; even if the loss is not convex in all network parameters, it is at least convex on the last layer. The other three benefits are due to restricting the form of the predicted distribution $q_{\xvec}$ to be a histogram density. 
First, the divergence to the full distribution $p$ can be efficiently computed. This contrasts previous work, which samples the KL for a subset of $y$ values \citep{norouzi2016reward,szegedy2016rethinking}. Second, the choice of $p$ is flexible, as long as its CDF can be evaluated for each bin. The weighting $\distweight_i = F(l_i + w_i) - F(l_i)$ can be computed offline once for each sample, making it inexpensive to query repeatedly for each sample during training.  Third, different distributional choices simply result in different weightings in the cross-entropy. This simplicity facilitates interpreting the impact of changing the distributional assumptions on $Y$. 

\subsection{Target distributions and related objectives}

Below, we consider some special cases for $p$ that are of interest and highlight connections to previous work. 

\myparagraph{Truncated Gaussian on $Y | \xvec$ and HL-Gaussian.}
Consider a truncated Gaussian distribution, on support $[a,b]$, as the target distribution. The mean $\mu$ for this Gaussian is the datapoint $y_j$ itself, with fixed variance $\sigma^2$. The pdf $p$ is
%
\begin{equation*}
p(y) = \frac{1}{Z\sigma \sqrt{2 \pi}} e^{-\frac{(y-\mu)^2}{2\sigma^2}}
\end{equation*}
where $Z = \frac{1}{2} (\text{erf}\left(\frac{b-\mu}{\sqrt{2}\sigma}\right) - \text{erf}\left(\frac{a-\mu}{\sqrt{2}\sigma}\right))$,
and the HL has
\small
\begin{equation*}
\distweight_i = \tfrac{1}{2Z} \left(\text{erf}\left(\frac{l_i + w_i - \mu}{\sqrt{2}\sigma}\right) - \text{erf}\left(\frac{l_i - \mu}{\sqrt{2}\sigma}\right) \right)
.
\end{equation*}
\normalsize
This distribution enables significant smoothing over $Y$, through the variance parameter $\sigma^2$. We call this loss HL-Gaussian, defined by number of bins $\nbins$ and variance $\sigma^2$. Based on positive empirical performance, it will be the main HL loss that we advocate for and analyze. 

\myparagraph{Soft Targets and a Histogram Density on $Y| \xvec$.}
In classification, such as multinomial logistic regression, it is typical to assume $Y | \xvec$ is a categorical distribution, where $Y$ is discrete.
The goal is still to estimate $\mathbb{E}[Y | \xvec]$ and when training, hard 0-1 values for $Y$ are used in the cross-entropy. Soft labels, instead of 0-1 labels, can be used by adding label noise \citep{norouzi2016reward,szegedy2016rethinking,pereyra2017regularizing}. This can be seen as an instance of HL, but for discrete $Y$, where a categorical distribution is selected for the target distribution. Minimizing the cross-entropy to these soft-labels corresponds to trying to match such a smoothed target distribution, rather than the original 0-1 categorical distribution. 


Such soft targets have also been considered for ordinal regression, again motivated as label smoothing, for age prediction \citep{gao2017deep,rothe2018deep}. The outputs are smoothed using radial basis function similarities to a set of bin centers. This procedure can be seen as selecting a histogram density for the target distribution, where the coefficients for each bin are determined by these radial basis function similarities. 
The resulting loss is similar to HL-Gaussian, with slightly different $\distweight_i$, though introduced as data augmentation to smooth (ordinal) targets.


\myparagraph{Dirac delta on $Y | \xvec$.}
Finally, we consider the relationship to maximum likelihood. 
For classification, \citet{norouzi2016reward} and \citet{szegedy2016rethinking} used a combination of maximum likelihood and a KL-divergence to a (uniform) distribution. \citet{szegedy2016rethinking} add uniform noise to the labels and \citet{norouzi2016reward} sample from an exponentiated reward distribution, with a temperature parameter, for structured prediction. Both consider only a finite set for $Y$, because they both address classification problems.

The relationship between KL-Divergence and maximum likelihood can be extended to continuous $Y$.
The connection is typically in terms of statistical consistency: the maximum likelihood estimator approaches the minimum of the KL-divergence to the true distribution, if the distributions are of the same parametric form \citep[Theorem 9.13]{wasserman2004all}. They can, however, be connected for finite samples with different distributions. Consider Gaussians centered around datapoints $y_j$, with arbitrarily small variances $\tfrac{1}{2}a^2$: 
\begin{equation*}
\delta_{a,j}(y) = \frac{1}{a^2 \sqrt{\pi}}\exp\left(-\tfrac{(y - y_j)^2}{a^2}\right). 
\end{equation*}
Let the target distribution have $p(y) = \delta_{a,j}(y)$ for each sample. Define function $p_{i,j}: [0,\infty) \rightarrow [0,1]$ as 
%
$p_{i,j}(a) = \int_{l_i}^{l_i + w_i} \delta_{a,j}(y) dy$
.
%
%
For each $y_j$, as $a \rightarrow 0$, $p_{i,j}(a) \rightarrow 1$ if $y_j \in [l_i, l_i + w_i]$ and $p_{i,j}(a) \rightarrow 0$ otherwise. 
So, for $i_j$ s.t. $y_j \!\in [l_{i_j}, l_{i_j} \!+\! w_i]$, 
\begin{align*}
\!\!\lim_{a \rightarrow 0} \! HL(\delta_{a,j},q_{\xvec_j}) &= - \log f_{i_j}(\xvec_j).
\end{align*}
The sum over samples for the HL to the Dirac delta on $Y | \xvec$, then, corresponds to the negative log-likelihood for $q_\xvec$
%
\begin{align*}
\argmin_{f_1, \ldots, f_\nbins} - \sum_{j=1}^\nsamples \log f_{i_j}(\xvec_j)
&= \argmin_{f_1, \ldots, f_\nbins} - \sum_{j=1}^\nsamples \log q_{\xvec_j}(y_j)
.
\end{align*}
Such a delta distribution on $Y | \xvec$ results in one coefficient $\distweight_i$ being 1, reflecting the distributional assumption that $Y$ is certainly in a bin.
In the experiments, we compare to this loss, which we call \textbf{HL-OneBin}.

Using a similar analysis to above, $p(y)$ can be considered as a mixture between $\delta_{a,j}(y)$ and a uniform distribution. For a weighting of $\epsilon$ on the uniform distribution, the resulting loss \textbf{HL-Uniform} has $p_i = \epsilon$ for $i \neq i_j$, and $p_{i_j} = 1- \nbins \epsilon$.


\section{Optimization properties of the HL}

There are at least two motivations for this loss, in terms of promoting the search for effective solutions. 
The first is the stability of gradients, promoting stable gradient descent. The second is a connection to learning optimal policies in reinforcement learning. Both provide some insight that the properties of the HL, during optimization, promote better generalization performance.

\myparagraph{Stable gradients for HL.} 
\citet{hardt2015train} have shown that the generalization performance for stochastic gradient descent is bounded by the number of steps
that stochastic gradient descent takes during training, even for non-convex losses. 
The bound is also dependent on the properties of the loss.
In particular, it is beneficial to have a loss function with small Lipschitz constant $L$, which bounds the norm of the gradient.
Below, we discuss how the HL with a Gaussian distribution (HL-Gaussian) in fact promotes an improved bound on this norm, over both the $\ell_2$ loss and the HL with all weight in one bin (HL-OneBin). 

\newcommand{\lip}{l}

In the proposition bounding the HL-Gaussian gradient, we assume 
\begin{equation}
f_i(\xvec) = \tfrac{\exp(\phivec_\thetavec(\xvec)^\top \wvec_i)}{\sum_{j=1}^\nbins \exp(\phivec_\thetavec(\xvec)^\top \wvec_j)} \label{eq_softmax}
\end{equation}
for some function $\phivec_\thetavec: \mathcal{X} \rightarrow \mathcal{R}^k$ parameterized by a vector of parameters $\thetavec$. For example, $\phivec_\thetavec(\xvec)$ could be the last hidden layer in a neural network, with parameters $\thetavec$ for the entire network up to that layer.  
The proposition provides a bound on the gradient norm in terms of the \emph{current network parameters}. Our goal is to understand how the gradients might vary \emph{locally for the parameters}, as opposed to globally bounding the norm and characterizing the Lipschitz constant only in terms of the properties of the function class and loss. 
\begin{proposition}[Local Lipschitz constant for HL-Gaussian]
Assume $\xvec, y$ are fixed, giving fixed coefficients $\distweight_i$ in HL-Gaussian.
Let $f_i(\xvec)$ be as in \eqref{eq_softmax}, defined by the parameters $\wvec = \{\wvec_1, \ldots, \wvec_\nbins\}$ and $\thetavec$, providing the predicted distribution $q_\xvec$. 
Assume for all $i$ that $\wvec_i^\top \phivec_\thetavec(\xvec)$ is locally $\lip$-Lipschitz continuous w.r.t $\thetavec$ 
\begin{equation}
\|  \nabla_\thetavec (\wvec_i^\top \phivec_\thetavec(\xvec)) \| \le \lip \label{eq_lipschitz}
\end{equation}
Then the norm of the gradient for HL-Gaussian, w.r.t. to all the parameters in the network $\{\thetavec, \wvec\}$, is bounded by
\begin{equation}
\!\!\|  \nabla_{\thetavec,\wvec} HL(p, q_{\xvec})  \|
 \le \! \left(\lip \!+\!  \| \phivec_\thetavec(\xvec) \| \right) \! \sum_{i=1}^\nbins | \distweight_i - f_i(\xvec)| \!\! \label{eq_hl_lipschitz}
\end{equation}
\end{proposition}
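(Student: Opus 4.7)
The plan is to exploit the fact that $HL(p, q_{\xvec})$ is a cross-entropy loss combined with a softmax output, so the per-logit gradient reduces to the familiar difference $f_i(\xvec) - \distweight_i$. Using this observation, I would split the parameter vector into the output weights $\wvec_1, \ldots, \wvec_\nbins$ and the feature parameters $\thetavec$, bound each block separately, and recombine.

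First, write $z_i = \phivec_\thetavec(\xvec)^\top \wvec_i$ so that $f_i(\xvec)$ is a softmax over $z_1, \ldots, z_\nbins$. A direct computation (standard cross-entropy with softmax) gives $\partial HL / \partial z_i = f_i(\xvec) - \distweight_i$, using that $\sum_i \distweight_i = F(b) - F(a) = 1$. Applying the chain rule then yields $\nabla_{\wvec_i} HL = (f_i(\xvec) - \distweight_i)\, \phivec_\thetavec(\xvec)$ and $\nabla_\thetavec HL = \sum_{i=1}^\nbins (f_i(\xvec) - \distweight_i)\, \nabla_\thetavec(\wvec_i^\top \phivec_\thetavec(\xvec))$.

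Second, I would bound each block. For the feature parameters, the triangle inequality together with assumption \eqref{eq_lipschitz} gives
\begin{equation*}
\| \nabla_\thetavec HL \| \le \sum_{i=1}^\nbins |f_i(\xvec) - \distweight_i|\, \| \nabla_\thetavec(\wvec_i^\top \phivec_\thetavec(\xvec))\| \le \lip \sum_{i=1}^\nbins |f_i(\xvec) - \distweight_i|.
\end{equation*}
For the output weights, since the blocks $\nabla_{\wvec_i} HL$ are orthogonal components of $\nabla_\wvec HL$, I would use $\|\nabla_\wvec HL\|^2 = \sum_i (f_i(\xvec) - \distweight_i)^2 \|\phivec_\thetavec(\xvec)\|^2$ and then the basic inequality $\sum_i a_i^2 \le (\sum_i |a_i|)^2$ to conclude $\|\nabla_\wvec HL\| \le \|\phivec_\thetavec(\xvec)\| \sum_i |f_i(\xvec) - \distweight_i|$.

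Finally, I would combine the two blocks via $\|\nabla_{\thetavec, \wvec} HL\| = \sqrt{\|\nabla_\thetavec HL\|^2 + \|\nabla_\wvec HL\|^2} \le \|\nabla_\thetavec HL\| + \|\nabla_\wvec HL\|$, which yields the claimed bound \eqref{eq_hl_lipschitz}. There is no real obstacle here beyond careful bookkeeping; the only subtle point is recognizing that $\sum_i \distweight_i = 1$ (so the softmax-cross-entropy identity applies) and that the per-block orthogonality across the $\wvec_i$ lets one upgrade pointwise block bounds to a bound on the concatenated gradient without losing an extra $\sqrt{\nbins}$ factor.
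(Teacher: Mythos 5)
Your proposal is correct and follows essentially the same route as the paper: compute the softmax--cross-entropy gradient $(f_i(\xvec)-\distweight_i)$ per logit, bound the $\wvec$ and $\thetavec$ blocks separately, and combine via $\sqrt{a^2+b^2}\le a+b$. The only cosmetic difference is that you evaluate $\|\nabla_\wvec HL\|$ exactly using block orthogonality before relaxing to $\sum_i|\distweight_i-f_i(\xvec)|$, whereas the paper applies the block triangle inequality directly; both yield the identical bound.
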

\begin{proof}
First consider the gradient of the HL, with explicit details on these computations in Appendix \ref{app_calculations}
\begin{align*}
\frac{\partial}{\partial \wvec_i} \sum_{j=1}^\nbins \distweight_j \log f_j(\xvec)
&=  (\distweight_i -f_i(\xvec) ) \phivec_\thetavec(\xvec)
\end{align*}
The norm of the gradient of HL in Equation $\eqref{eq_hl}$, w.r.t. $\wvec$ which is composed of all the weights $\wvec_i \in \RR^\nbins$ is
\begin{align*}
\!\! \!\Big\| \frac{\partial}{\partial \wvec} \sum_{j=1}^\nbins \distweight_j \log f_j(\xvec) \Big\|
&\le  \sum_{i=1}^\nbins \Big\| \frac{\partial}{\partial \wvec_i} \sum_{j=1}^\nbins \distweight_j \log f_j(\xvec) \Big\|\\
&= \sum_{i=1}^\nbins  \left\| (\distweight_i -f_i(\xvec) ) \phivec_\thetavec(\xvec) \right\| \\
&\le  \sum_{i=1}^\nbins | \distweight_i -f_i(\xvec) | \| \phivec_\thetavec(\xvec) \| 
\end{align*}
Similarly, the norm of the gradient w.r.t. $\thetavec$ is
\begin{align*}
\!\!\!\!\Big\| \!\frac{\partial}{\partial \thetavec} \!\!\sum_{j=1}^\nbins \!\distweight_j \log f_j(\xvec)\Big\| \!
&= \Big\| \sum_{i=1}^\nbins (\distweight_i \!-\!f_i(\xvec) \!) \nabla_\thetavec \wvec_i^\top\!\!\phivec_\thetavec(\xvec) \Big\|\\
&\le \sum_{i=1}^\nbins \left\| (\distweight_i \!-\!f_i(\xvec) \!) \nabla_\thetavec \wvec_i^\top\!\!\phivec_\thetavec(\xvec) \right\|\\
&\le \sum_{i=1}^\nbins | \distweight_i -f_i(\xvec) | \lip 
\end{align*}
Together, these bound the norm $\|  \nabla_{\thetavec,\wvec} HL(p, q_{\xvec})  \|$.
\end{proof}

The results by \citet{hardt2015train} suggest it is beneficial for the local Lipschitz constant---or the norm of the gradient---to be small on each step.
HL-Gaussian provides exactly this property. Besides the network architecture---which we are here assuming is chosen outside of our control---the HL-Gaussian gradient norm is proportional to $|\distweight_i -f_i(\xvec)|$. This number is guaranteed to be less than 1, but generally is likely to be even smaller, especially if $f_i(\xvec)$ reasonably accurately predicts $\distweight_i$. Further, the gradients should push the weights to stay within a range specified by $\distweight_i$, rather than preferring to push some to be very small---close to 0---and others to be close to 1. For example, if $f_i(\xvec)$ starts relatively uniform, then the objective does not encourage predictions $f_i(\xvec)$ to get smaller than $\distweight_i$. If $\distweight_i$ are non-negligible, this keeps $f_i(\xvec)$ away from zero and the loss in a smaller range.  

This contrasts both the norm of the gradient for the $\ell_2$ loss and HL-OneBin. For the $\ell_2$ loss, $(f(\xvec) - y) \inlinevec{\nabla_\thetavec \wvec^\top \phivec_\thetavec(\xvec)}{\phivec_\thetavec(\xvec)}$ is the gradient, giving gradient norm bound $(\lip \!+\!  \| \phivec_\thetavec(\xvec) \|) | f(\xvec) - y|$. The constant $| f(\xvec) - y|$, as opposed to $\sum_{i=1}^\nbins |\distweight_i -f_i(\xvec)|$, can be much larger, even if $y$ is normalized between $[0,1]$, and can vary significantly more. 
HL-OneBin, on the other hand, shares the same constant as HL-Gaussian, but suffers from another problem. The Lipschitz constant $\ell$ in Equation \eqref{eq_lipschitz} will likely be larger, because $\distweight_i$ is frequently zero and so pushes $f_i(\xvec)$ towards zero. This results in larger objective values and pushes $\wvec_i^\top \phivec_\thetavec(\xvec)$ to get larger, to enable $f_i(\xvec)$ to get close to 1. 


\myparagraph{Connection to reinforcement learning.} 
The HL can also be motivated through a connection to maximum entropy reinforcement learning. 
In reinforcement learning, an agent iteratively selects actions and transitions between states, to maximize (long-term) reward. 
The agent's goal is to find an optimal policy, in as few interactions as possible. To do so, the agent begins by exploring more, to then enable more efficient convergence to optimal. Supervised learning can be expressed as a reinforcement learning problem \citep{norouzi2016reward}, where action selection conditioned on a state corresponds to making a prediction conditioned on a feature vector. An alternative view to minimizing prediction error is to search for a policy to make accurate predictions. 

One strategy to efficiently find an optimal policy is through a maximum entropy objective. The policy balances between selecting the action it believes to be optimal---make its current best prediction---and acting more randomly---with high-entropy. For continuous action set $\mathcal{Y}$, the goal is to 
minimize the following objective
\vspace{-0.2cm}
\begin{equation}
\int_{\mathcal{X}} p_s(\xvec) \Big[ - \tau h(q_{\xvec}) -  \int_{\mathcal{Y}} q_\xvec(y) r(y,y_i) dy \Big] d\xvec \label{eq_rl}
\end{equation}
where $\tau > 0$; 
$p_s$ is a distribution over states $\xvec$; $q_\xvec$ is the policy or distribution over actions for a given $\xvec$; and $r(y, y_i)$ is the reward function, such as the negative of the objective $r(y,y_i) = - \tfrac{1}{2} ( y - y_i )^2$. Minimizing \eqref{eq_rl} corresponds to minimizing the KL-divergence across $\xvec$ between $q_{\xvec}$ and the exponentiated payoff distribution 
$p(y) = \frac{1}{Z} \exp(r(y,y_i) / \tau)$
%
where $Z = \int \exp(r(y,y_i) / \tau)$, because 
\vspace{-0.2cm}
\begin{align*}
D_{KL}(q_\xvec || p) 
&= -h(q_{\xvec}) - \int q_\xvec(y) \log p(y) dy \\
&= - h(q_{\xvec}) - \tau^{-1} \int q_\xvec(y) r(y,y_i) dy + \log Z.
\end{align*}
%
%
The  connection between the HL and maximum-entropy reinforcement learning is that both are minimizing a divergence to this exponentiated distribution $p$.
The HL, however, is minimizing $D_{KL}(p || q_{\xvec})$ instead of $D_{KL}(q_\xvec || p)$. For example, Gaussian target distribution with variance $\sigma^2$ corresponds to minimizing $D_{KL}(p || q_{\xvec})$ with $r(y,y_i) = - \tfrac{1}{2} ( y - y_i )^2$ and $\tau = \sigma^2$. 
These two KL-divergences are not the same, but a similar argument to \citet{norouzi2016reward} could be extended for continuous $y$, showing $D_{KL}(q_\xvec || p)$ is upper-bounded by $D_{KL}(p || q_{\xvec})$ plus variance terms. The intuition, then, is that minimizing the HL is promoting an efficient search for an optimal (prediction) policy.

\newcolumntype{b}{>{\columncolor{aliceblue}}l}

\begin{table*}[t]
  \centering
  \setlength\tabcolsep{.7pt}
\begin{tabular}{lllllbl}
	\hline
	Method             & Train objective                          & Train MAE                           & Train RMSE                          & Test objective                              & Test MAE                               & Test RMSE                              \\
	\hline
	Linear Reg.  & $ 6738.719$ {\scriptsize $(\pm 10.024)$}                                     & $ 607.277$ {\scriptsize $(\pm 0.706)$} & $ 820.896$ {\scriptsize $(\pm 0.610)$} & $ 6957.086$ {\scriptsize $(\pm 41.419)$}                                      & $ 616.992$ {\scriptsize $(\pm 2.461)$} & $ 834.077$ {\scriptsize $(\pm 2.485)$} \\
	$\ell_2$         & $ 0.002$ {\scriptsize $(\pm 0.001)$}   & $ 15.624$ {\scriptsize $(\pm 3.353)$}  & $ 20.774$ {\scriptsize $(\pm 4.713)$}  & $ 0.001$ {\scriptsize $(\pm 0.000)$}   & $ 19.110$ {\scriptsize $(\pm 3.034)$}  & $ 29.512$ {\scriptsize $(\pm 3.622)$}  \\
	HL-Gaussian & $ 146.521$ {\scriptsize $(\pm 0.045)$} & $ 5.266$ {\scriptsize $(\pm 0.155)$}   & $ 7.097$ {\scriptsize $(\pm 0.169)$}   & $ 147.300$ {\scriptsize $(\pm 0.102)$} & $ \mathbf{8.992}$ {\scriptsize $(\pm 0.235)$}   & $ \mathbf{19.980}$ {\scriptsize $(\pm 2.169)$}\\
	\hline  
	$\ell_1$  & $ 0.152$ {\scriptsize $(\pm 0.002)$}   & $ 12.084$ {\scriptsize $(\pm 0.665)$}  & $ 16.369$ {\scriptsize $(\pm 0.651)$}  & $ 0.161$ {\scriptsize $(\pm 0.006)$}   & $ 16.180$ {\scriptsize $(\pm 0.606)$}  & $ 38.884$ {\scriptsize $(\pm 3.760)$}  \\
	$\ell_2$+Noise  & $ 0.000$ {\scriptsize $(\pm 0.000)$}   & $ 11.398$ {\scriptsize $(\pm 1.108)$}  & $ 15.184$ {\scriptsize $(\pm 1.466)$}  & $ 0.001$ {\scriptsize $(\pm 0.001)$}   & $ 15.233$ {\scriptsize $(\pm 1.038)$}  & $ 31.077$ {\scriptsize $(\pm 7.616)$}  \\
	$\ell_2$+Clipping   & $ 0.000$ {\scriptsize $(\pm 0.000)$}   & $ 11.090$ {\scriptsize $(\pm 0.382)$}  & $ 14.331$ {\scriptsize $(\pm 0.450)$}  & $ 0.001$ {\scriptsize $(\pm 0.000)$}   & $ 14.795$ {\scriptsize $(\pm 0.362)$}  & $ 23.052$ {\scriptsize $(\pm 0.614)$}  \\
\hline
	HL-OneBin   & $ 7.387$ {\scriptsize $(\pm 0.185)$}   & $ 24.623$ {\scriptsize $(\pm 0.055)$}  & $ 33.732$ {\scriptsize $(\pm 3.072)$}  & $ 59.141$ {\scriptsize $(\pm 1.653)$}  & $ 28.001$ {\scriptsize $(\pm 0.322)$}  & $ 63.290$ {\scriptsize $(\pm 5.249)$}  \\
	HL-Uniform       & $ 7.525$ {\scriptsize $(\pm 0.169)$} & $ 24.603$ {\scriptsize $(\pm 0.016)$} & $ 31.257$ {\scriptsize $(\pm 1.600)$} & $ 58.553$ {\scriptsize $(\pm 1.356)$} & $ 28.012$ {\scriptsize $(\pm 0.392)$} & $ 71.088$ {\scriptsize $(\pm 7.304)$} \\
	MDN                & $ -366.062$ {\scriptsize $(\pm 0.225)$} & $ 14.398$ {\scriptsize $(\pm 1.604)$} & $ 22.977$ {\scriptsize $(\pm 3.759)$} & $-365.004$ {\scriptsize $(\pm 0.543)$} &$ 17.950$ {\scriptsize $(\pm 1.514)$} & $ 28.355$ {\scriptsize $(\pm 2.781)$} \\	
	$\ell_2$+Softmax                & $ 0.000$ {\scriptsize $(\pm 0.000)$} & $ 9.183$ {\scriptsize $(\pm 3.784)$} & $ 12.969$ {\scriptsize $(\pm 5.313)$} & $0.001$ {\scriptsize $(\pm 0.000)$} &$ 12.720$ {\scriptsize $(\pm 3.609)$} & $ 22.383$ {\scriptsize $(\pm 4.525)$} \\	
	\hline
\end{tabular}
  \vspace{-0.2cm}
  \caption{Performance on CT Position dataset. All the numbers are multiplied by $10^2$.}
  \label{tab:ctscan}
  \vspace{-0.4cm}
\end{table*}

\vspace{-0.2cm}
\section{Experiments}

In this section, we investigate the utility of the HL-Gaussian for regression, compared to using an $\ell_2$ loss.
We particularly investigate why the modification to this distributional loss improves performance, designing experiments to test if it is due to (a) the utility of learning distributions or smoothed targets, (b) a bias-variance trade-off from bin size or variance in the HL-Gaussian, (c) an improved representation, (d) nonlinearity introduced by the HL and (e) improved optimization properties of the loss.
 

\myparagraph{Datasets and pre-processing.}
All features are transformed to have zero mean and unit variance. We randomly split the data into train and test sets in each run.\\ 
The \textbf{CT Position} dataset is from CT images of patients \citep{graf20112d}, with 385 features 
and the target set to the relative location of the image. \\ 
The \textbf{Song Year} dataset is a subset of The Million Song Dataset \citep{bertin2011million}, with 90 audio features for a song and target corresponding to the release year. \\
The \textbf{Bike Sharing} dataset \citep{fanaee2014event}, about hourly bike rentals for two years, has 16 features and target set to the number of rented bikes. 

Root mean squared error (RMSE) and mean absolute error (MAE) are reported over 5 runs, with standard errors. 
We include both errors and objective values, on train and test, to provide a more complete picture of
the causes of differences between the losses. 
For space, we only include in-depth results on CT Position in the main body. We summarize the overall conclusions on all three datasets below, and include the tables for Song Year and Bike Sharing in Appendix \ref{app_tables} and more dataset information in Appendix \ref{app_datasets}. 

\newcommand{\figwidthfour}{0.33\textwidth}

\myparagraph{Algorithms.}
We compared several regression strategies, distribution learning approaches and several variants of HL. All the approaches---except for Linear Regression---use the same neural network, with differences only in the output layer. 
The architecture for Song Year is 90-45-45-45-45-1 (4 hidden layers of size 45), for Bike Sharing is 16-64-64-64-64 and for CT Position is 385-192-192-192-192-1. All units employ ReLU activation, except the last layer with linear activations. Unless specified otherwise, all networks using HL have 100 bins. Meta-parameters for comparison algorithms are chosen according to best Test MAE. Network architectures were chosen according to best Test MAE for $\ell_2$, with depth and width varied across 7 different values with final choices being neither biggest nor smallest.  

\textbf{Linear Regression} is included as a baseline, using ordinary least squares with the inputs.\\
\textbf{Squared-error} $\ell_2$ is the neural network trained using the $\ell_2$ loss. The targets are normalized to range $[0,1]$, which was needed to improve stability and accuracy.\\ 
\textbf{Absolute-error} $\ell_1$ is the neural network using the $\ell_1$ loss.\\
\textbf{$\mathbf{\ell_2}$+Noise} is the same as $\ell_2$, except Gaussian noise is added to the targets as a form of augmentation. The standard deviation of the noise is selected from $\{10^{-5}, 10^{-4}, 10^{-3}, 10^{-2}, 10^{-1}\}$. \\
\textbf{$\mathbf{\ell_2}$+Clipping} is the same as $\ell_2$, but with gradient norm clipping during training. The threshold for clipping is selected from $\{0.01, 0.1, 1, 10\}$.\\
\textbf{HL-OneBin} is the HL, with Dirac delta target distribution.\\
\textbf{HL-Uniform} is the HL, with a target distribution that mixes between a delta distribution and the uniform distribution, with a weighting of $\epsilon$ on the uniform and $1-\epsilon$ on the delta, where $\epsilon \in \{10^{-5}, 10^{-4}, 10^{-3}, 10^{-2}, 10^{-1}\}$.\\
\textbf{HL-Gaussian} is the HL, with a truncated Gaussian distribution as the target distribution. The variance $\sigma^2$ is set to the radius of the bins. \\
\textbf{MDN} is a Mixture Density Network \cite{bishop1994mixture} that models the target distribution as a mixture of Gaussian distributions. The original model uses an exponential activation to model the standard deviations. However, inspired by \cite{lakshminarayanan2017simple}, we used softplus activation plus a small constant ($10^{-2}$) to avoid numerical instability. We selected the number of components from $\{2, 4, 8, 16 , 32\}$. Predictions are made by taking the mean of the mixture model given by the MDN. \\
\textbf{$\mathbf{\ell_2}$+Softmax} use a softmax-layer with $\ell_2$ loss, $\sum_{i=1}^\nbins (f_i(\xvec_j) c_i - y_j)^2$ for bin centers $c_i$, with otherwise the same settings as HL-Gaussian. 

We used Scikit-learn \citep{scikit-learn} for the implementations of Linear Regression, and Keras \citep{chollet2015keras} for the neural network models.
All neural network models are trained with mini-batch size 256 using the Adam optimizer \citep{kingma2014adam} with a learning rate 1e-3 and the parameters are initialized according to the method suggested by \citet{lecun1998efficient}. Dropout \citep{srivastava2014dropout} with rate $0.05$ is added to the input layer of all neural networks to avoid overfitting. We trained the networks for 1000 epochs on CT Position, 150 epochs on Song Year and 500 epochs on Bike Sharing.

\begin{figure*}[ht!]
 \vspace{-0.3cm}
\centering
 \begin{subfigure}[b]{\figwidthfour}
         \includegraphics[width=\textwidth]{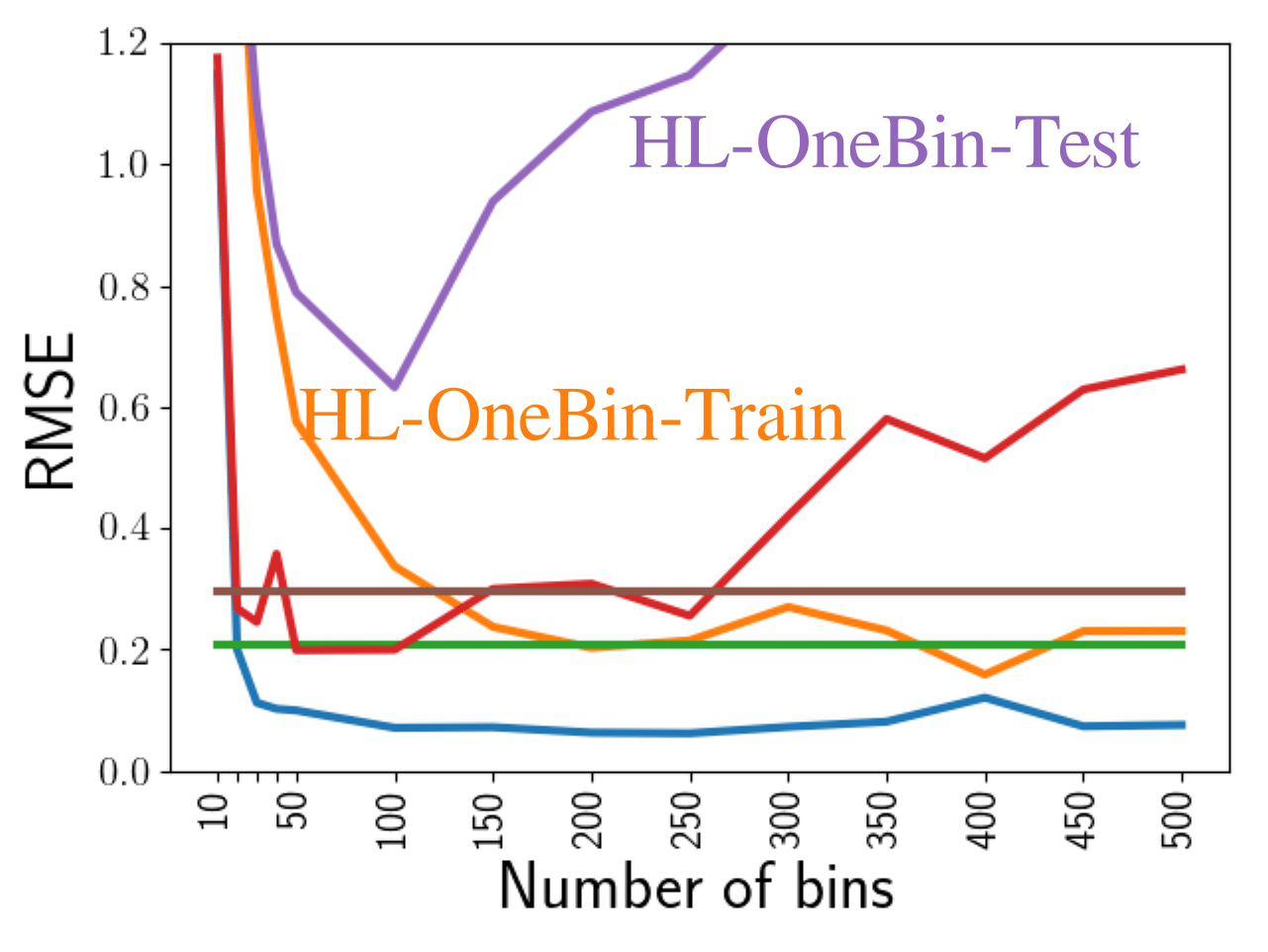}
         \caption{RMSE with changing bins}
         \label{fig:mae-bins}
 \end{subfigure}
 \begin{subfigure}[b]{\figwidthfour}
         \includegraphics[width=\textwidth]{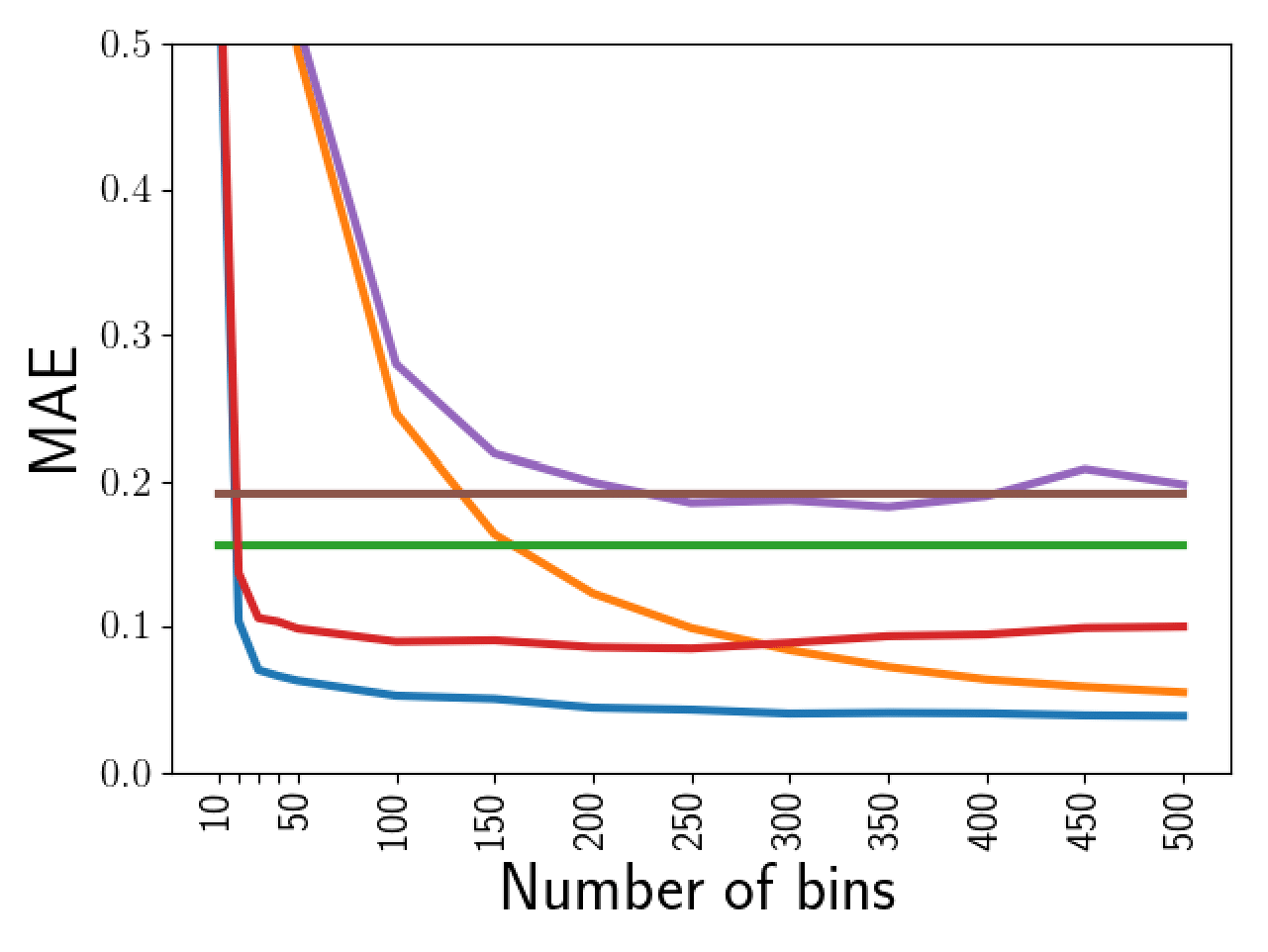}
         \caption{MAE with changing bins}
         \label{fig:rmse-bins}
 \end{subfigure}   
  \begin{subfigure}[b]{\figwidthfour}
         \includegraphics[width=\textwidth]{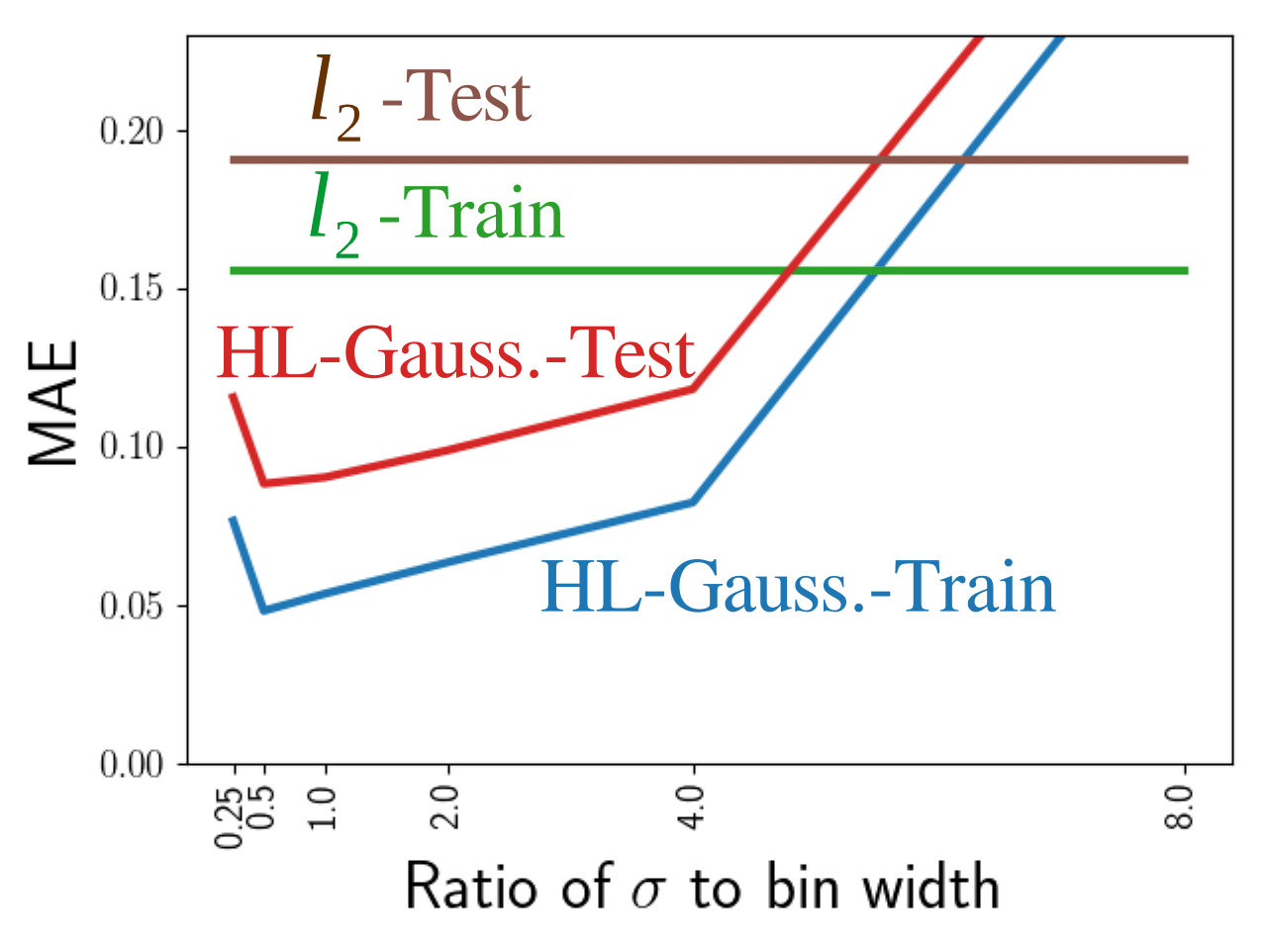}
         \caption{MAE with changing $\sigma$}
         \label{fig:mae-sigma}
 \end{subfigure} 
 \vspace{-0.7cm}
\caption{Investigating the bias-variance trade-offs for HL-Gaussian, on the CT Position dataset. Both training and testing performance are reported, with Regression included as a baseline, which does not vary with bins or $\sigma$. For a wide range of both bins and $\sigma$, HL-Gaussian improves performance over Regression, emphasizing that this performance improvement is not due to carefully setting these additional parameters in the loss. For MAE, for both bins and $\sigma$, the training and testing error have similar shapes, suggesting that there is no significant effect of overfitting. An interesting point is that according to the RMSE, however, there does appear to be some overfitting. Overfitting is typically stated agnostic to the loss, but here MAE and RMSE show different trends. We hypothesize that this result occurs because RMSE is magnifying errors that are slightly larger, even though in terms of absolute error, the performance only slightly degrades.}
\label{fig:biasvariance}
\vspace{-0.3cm}
\end{figure*}

\myparagraph{Overall performance and conclusions (Tables \ref{tab:ctscan}, \ref{tab:yearpred}, \ref{tab:bike_narrow}).}
We first report the relative performance of all these models, on the CT Position dataset (Table \ref{tab:ctscan}) and, in Appendix \ref{app_tables}, the Song Year dataset (Table \ref{tab:yearpred}) and Bike Sharing dataset (Table \ref{tab:bike_narrow}). The overall conclusions are that the HL-Gaussian never harms performance---slightly improving performance on the Song Year dataset---and otherwise can significantly improve performance over alternatives---on both the CT Position and Bike Sharing datasets. We only report the full set of algorithms for CT Position, and more in-depth experiments understanding the result on that domain. 

%
%
%

\myparagraph{Learning other distributions is not effective (Table \ref{tab:ctscan}).}\\
HL-Gaussian improves performance, but the other distribution-learning approaches appear to have little advantages, as shown in Table \ref{tab:ctscan}. 
HL-OneBin and HL-Uniform can actually do worse than Regression. MDN provides only minor gains over Regression. Interestingly, 
it has been shown MDN suffers from numerical instabilities, making training difficult \cite{oord2016pixel,rupprecht2016learning}.

A related idea to learning the distribution explicitly is to use data augmentation, through label smoothing.
We therefore also compared to directly modifying the labels and gradients, with $\ell_2$-Noise and $\ell_2$-Clipping. These models do perform slightly better than Regression for some settings, but do not achieve the same gains as HL-Gaussian. 

\myparagraph{The bias-variance trade-off in the loss definition is not significantly impacting performance (Figure \ref{fig:biasvariance}).}\\
If one fixes the possible range of the output variable, the distribution becomes more and more expressive as the number of bins increases. The model could have a higher chance of overfitting in this situation. Reducing the number of bins, on the other hand, introduces discretization error and increases the bias. 
Further, the entropy parameter $\sigma^2$ introduces a bias-variance trade-off, making the target distribution more uniform as entropy increases---likely resulting in lower variance---but also washing out the signal---incurring high bias. 
The selection of these parameters, therefore, may provide a opportunity to influence this bias-variance trade-off, and improve performance by essentially optimizing the loss for a problem. The ability for the user to select these parameters could explain some of the performance gains in recent results \citep{gao2017deep,bellemare2017distributional}, compared to standard losses that cannot be tuned. 

We tested the impact of varying the number of bins, and the entropy $\sigma^2$ for HL-Gaussian.
%
We found that these parameters, especially the entropy, can have an impact on performance, but that the results were much more robust to changing these parameters than might be expected (reported in more depth in Figure \ref{fig:biasvariance}). It does not seem to be the case, therefore, that the tuning of these hyperparameters is the primary explanation for the improved performance.

\begin{table*}[ht!]
\vspace{-0.2cm}
	\centering
\setlength\tabcolsep{2.pt}

\begin{tabular}{llllll}
	\hline
	& Loss               & Default                               & Fixed                                   & Initialized                            & Random                                   \\
	\hline
	\multirow{2}{*}{Train MAE}  & Regression         & $ 15.624$ {\scriptsize $(\pm 3.353)$} & $ 288.667$ {\scriptsize $(\pm 13.344)$} & $ 24.814$ {\scriptsize $(\pm 5.917)$}  & $ 923.335$ {\scriptsize $(\pm 15.665)$}  \\
	  & HL-Gaussian & $ 5.266$ {\scriptsize $(\pm 0.155)$}  & $ 16.890$ {\scriptsize $(\pm 2.026)$}   & $ 5.971$ {\scriptsize $(\pm 0.103)$}   & $ 247.851$ {\scriptsize $(\pm 9.686)$}   \\
	  \hline
	\multirow{2}{*}{Train RMSE} & Regression         & $ 20.774$ {\scriptsize $(\pm 4.713)$} & $ 399.664$ {\scriptsize $(\pm 17.364)$} & $ 34.185$ {\scriptsize $(\pm 8.868)$}  & $ 1224.689$ {\scriptsize $(\pm 17.199)$} \\
	 & HL-Gaussian & $ 7.097$ {\scriptsize $(\pm 0.169)$}  & $ 24.744$ {\scriptsize $(\pm 2.470)$}   & $ 7.834$ {\scriptsize $(\pm 0.130)$}   & $ 555.212$ {\scriptsize $(\pm 22.567)$}  \\
	 \hline
	  \rowcolor{aliceblue}  & Regression         & $ 19.110$ {\scriptsize $(\pm 3.034)$} & $ 291.980$ {\scriptsize $(\pm 13.587)$} & $ 28.310$ {\scriptsize $(\pm 6.018)$}  & $ 930.481$ {\scriptsize $(\pm 19.094)$}  \\
	\rowcolor{aliceblue}   \multirow{-2}{*}{Test MAE} & HL-Gaussian & $ 8.992$ {\scriptsize $(\pm 0.235)$}  & $ 20.296$ {\scriptsize $(\pm 1.832)$}   & $ 10.089$ {\scriptsize $(\pm 0.087)$}  & $ 260.863$ {\scriptsize $(\pm 10.751)$}  \\
	   \hline
	\multirow{2}{*}{Test RMSE}  & Regression         & $ 29.512$ {\scriptsize $(\pm 3.622)$} & $ 403.070$ {\scriptsize $(\pm 17.237)$} & $ 44.418$ {\scriptsize $(\pm 12.183)$} & $ 1231.502$ {\scriptsize $(\pm 21.574)$} \\
	  & HL-Gaussian & $ 19.980$ {\scriptsize $(\pm 2.169)$} & $ 31.288$ {\scriptsize $(\pm 2.145)$}   & $ 23.161$ {\scriptsize $(\pm 3.682)$}  & $ 589.087$ {\scriptsize $(\pm 22.525)$}  \\
	\hline
\end{tabular}
\vspace{-0.3cm}		
	\caption{Representation experiment results on CT Position dataset. All the numbers are multiplied by $10^2$. We tested (a) swapping the representations and re-learning on the last layer (\textbf{Fixed}), (b) initializing with the other's representation (\textbf{Initialized}), (c) and using the same fixed random representation for both (\textbf{Random}) and only learning the last layer. We highlight the Test MAE, though the other rows have similar trends. Using the HL-Gaussian representation for Regression (first column, Fixed) causes a sudden spike in error, even though the last layer in Regression is re-trained. This suggests the representation is tuned to HL-Gaussian. The representation does not even seem to give a boost in performance, as an initialization (second column, Initialization). Finally, even with the same random representation, where HL-Gaussian cannot be said to improve the representation, HL-Gaussian still obtains significantly better performance, solely from optimizing the last layer with a different loss than Regression. }
	\label{tab:representation}
\vspace{-0.3cm}	
\end{table*}

\begin{figure*}[ht!]
\centering
 \begin{subfigure}[b]{\figwidthfour}
         \includegraphics[width=\textwidth]{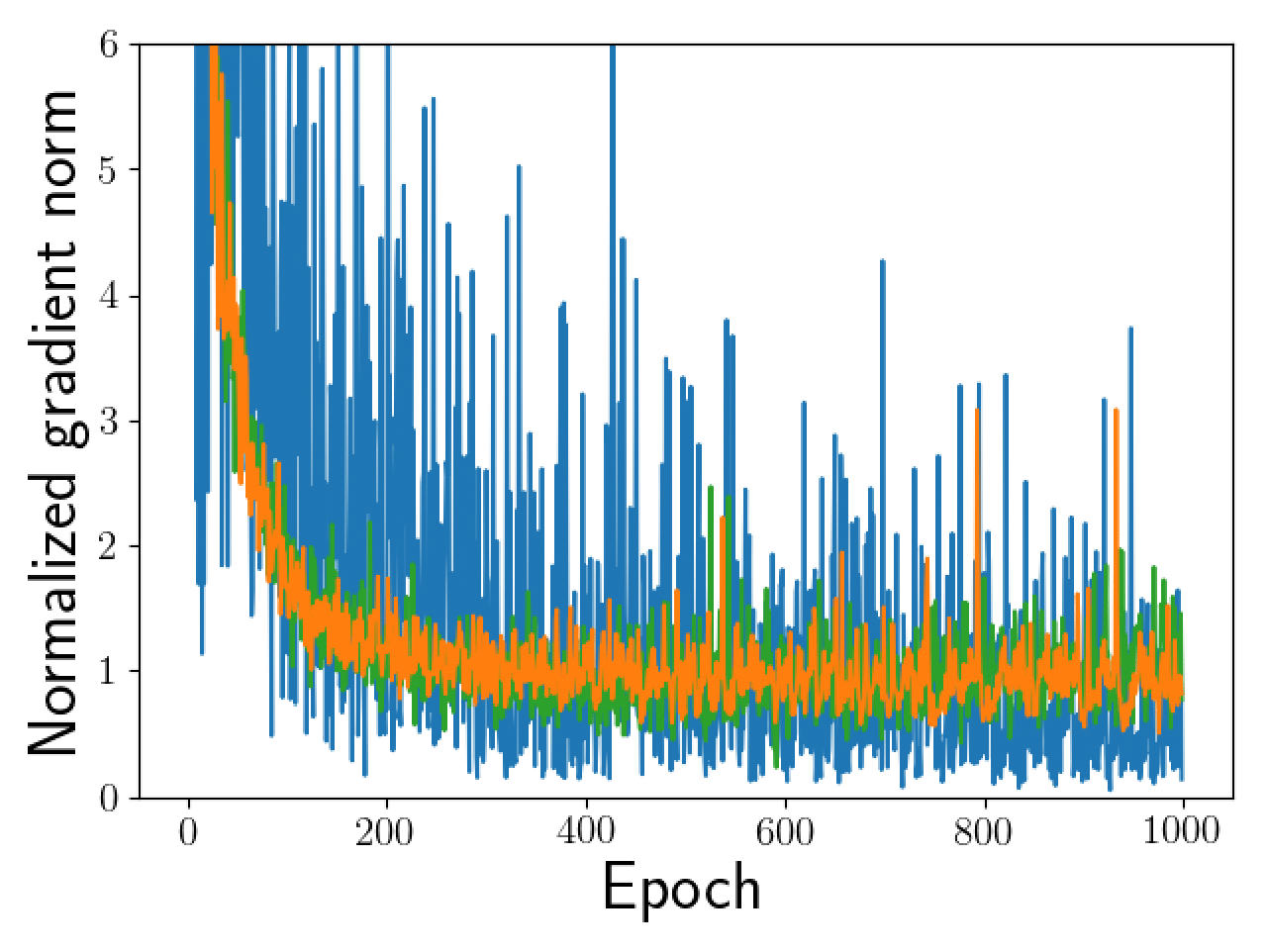}
 \vspace{-0.7cm}	
        \caption{Normalized gradient norm during training}
         \label{fig:mae-bins}
 \end{subfigure}
  \begin{subfigure}[b]{\figwidthfour}
         \includegraphics[width=\textwidth]{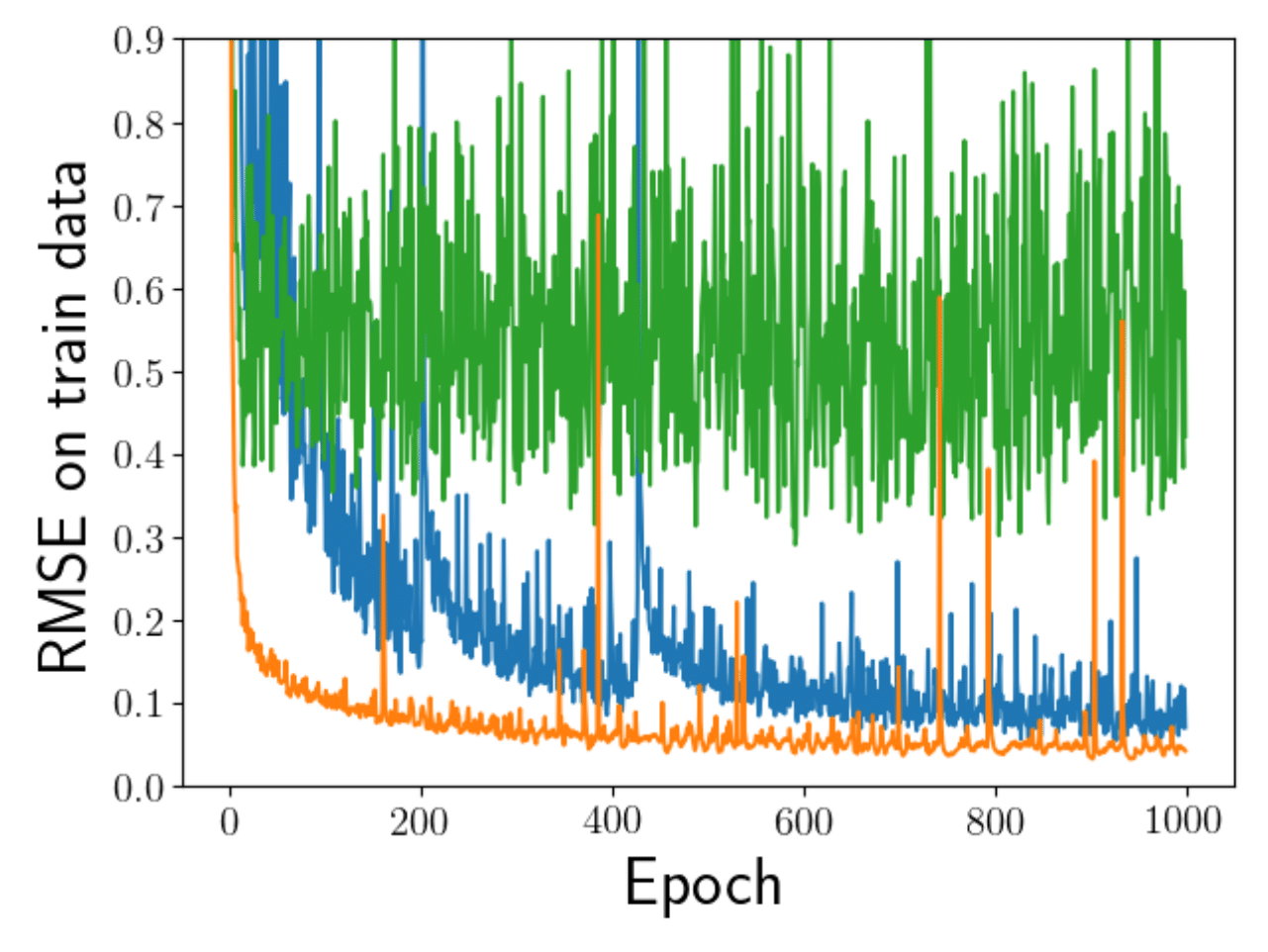}
\vspace{-0.7cm}	
         \caption{RMSE on training data, vs. epochs}
         \label{fig:mae-bins}
 \end{subfigure}
 \begin{subfigure}[b]{\figwidthfour}
         \includegraphics[width=\textwidth]{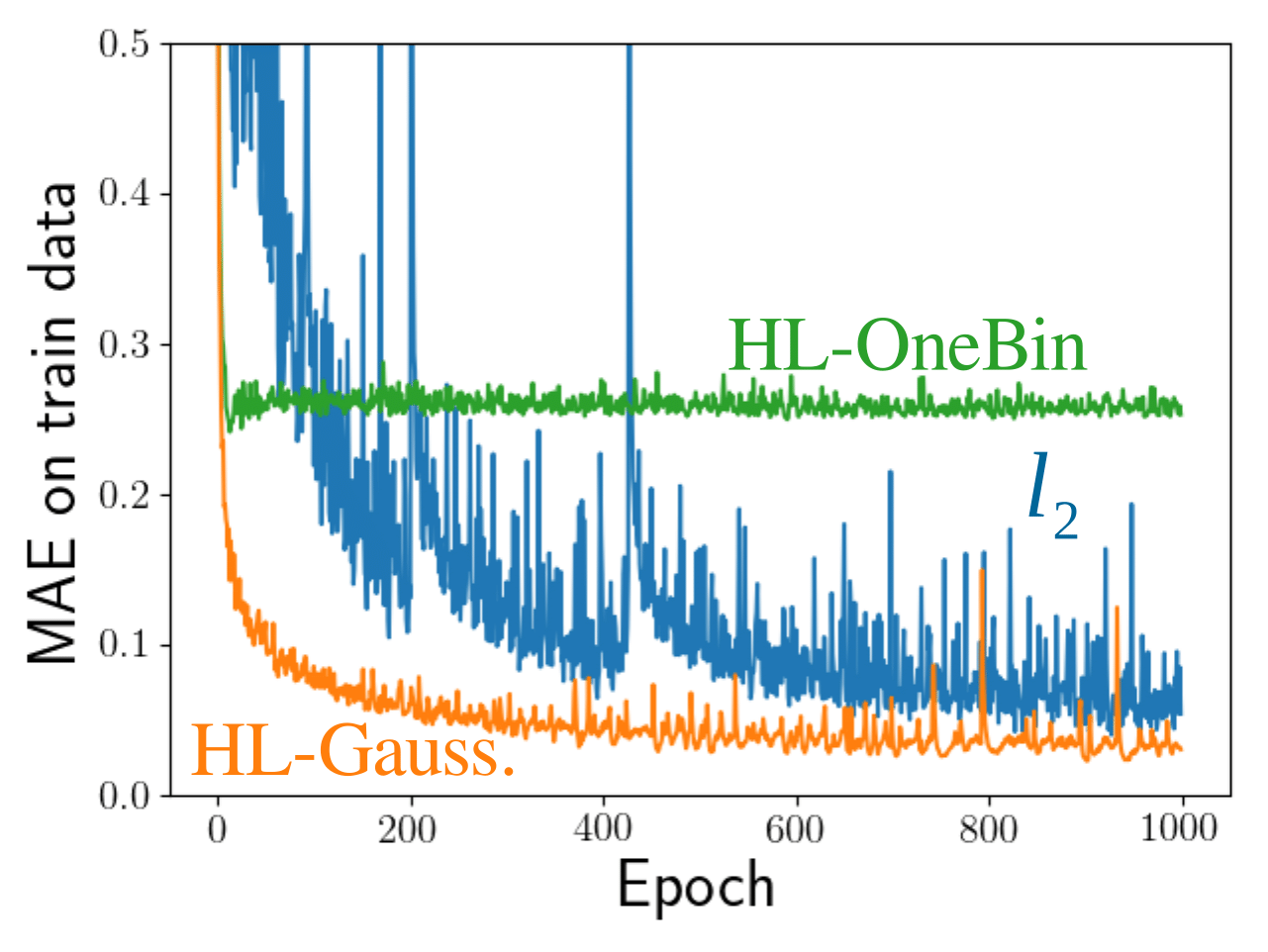}
\vspace{-0.7cm}	
         \caption{MAE on training data, vs. epochs}
         \label{fig:rmse-bins}
 \end{subfigure}   
\vspace{-0.7cm}
\caption{The norm of the gradient and error values for the training on the CT Position dataset, for three training objectives. The behaviour for the testing error is similar to the training error, and so is included in Appendix \ref{app_results}. The gradient norms are normalized by the median value, where the median norms are 0.0014 for Regression, 0.0640 for HL-OneBin, and 0.0305 for HL-Gaussian. The median norm for HL-OneBin is an order of magnitude larger than HL-Gaussian, so though it is not variable, it is consistently larger. Because targets are normalized, the median norm for $\ell_2$ is actually lower, but it has significantly more variability.}
\label{fig:lossplots}
\vspace{-0.3cm}
\end{figure*}

\myparagraph{The learned representation is not better (Table \ref{tab:representation}).}\\
Learning a distribution, as opposed to a single statistic, provides a more difficult target---one that could require a better representation.
The hypothesis is that amongst the functions $f$ in your function class $\mathcal{F}$, there is a set of functions
that can predict the targets almost equally well. To distinguish amongst these functions, a wider range of tasks can make it more likely to select the true function, or at least one that generalizes better. 


We conducted three experiments to test the hypothesis than an improved representation is learned. We first trained with HL-Gaussian and $\ell_2$, to obtain their representations. We tested (a) swapping the representations and re-learning only the last layer, (b) initializing with the other's representation, (c) and using the same fixed random representation for both. For (a) and (c), the optimizations for both are convex, since the representation is fixed.
The results in Table \ref{tab:representation}, are surprisingly conclusive: using the representation from HL-Gaussian does not improve performance of $\ell_2$, and even under a random representation, HL-Gaussian performs significantly better than $\ell_2$. 
This suggests that HL-Gaussian is not causing a more useful or more general representation to be learned, as otherwise $\ell_2$ should be able to take advantage of that representation. 

%
%

\myparagraph{The softmax nonlinearity is not the main cause (Table \ref{tab:ctscan}).}
The HL-Gaussian can be seen as a generalized linear model, where a small amount of non-linearity is introduced from the transfer.
The level of nonlinearity is similar to that in the cross-entropy loss, and the effect should be small because each transformed output $\wvec_i^\top \phivec(\xvec)$ has to predict a probability value. This contrasts with an alternative way to use a softmax layer---which we call $\ell_2$+Softmax---which gets to tune the softmax layer to directly predict $y$ given $\xvec$. Such a layer has additional parameters to predict one target (100 additional parameters, for 100 bins). This contrast the HL-Gaussian, which has also 100 bins but has to predict 100 targets instead of just one target.\footnote{It is possible that having 100 extra parameters in the last layer makes it possible to benefit from randomness, over the $\ell_2$. We ran experiments enabling the $\ell_2$ to have 100 outputs, each predicting the target but with different initial weights. Even selecting the best of the 100 outputs \emph{on the test data} only slightly improved performance, with a test MAE of 18.421.} 

Despite the differences between the role of the softmax in HL-Gaussian and $\ell_2$+Softmax, we provide this comparison to provide some insight into potential nonlinearities introduced by the loss. The result in Table \ref{tab:ctscan} shows that this softmax layer can improve performance (to 12.720), but not as significant as HL-Gaussian (8.992). This is particularly intriguing, because as mentioned above, $\ell_2$+Softmax can much more flexibly tune the nonlinear softmax layer. The ability to outperform $\ell_2$+softmax-layer emphasizes that there are properties of the HL causing improvements beyond the use of the softmax.

\myparagraph{HL-Gaussian trains fast (Figure \ref{fig:lossplots}).}\\
We trained $\ell_2$, HL-OneBin, and HL-Gaussian on the CT Position dataset with no dropout to find the role of the loss function on the rate of convergence. We also computed the norm the gradient w.r.t. the parameters of the last layer after each epoch, and normalized the gradient norms of each model by their median to compare their variability.  As shown in Figure \ref{fig:lossplots}, HL-Gaussian has significantly better behaved gradients, than $\ell_2$. Correspondingly, it converges significantly faster and more smoothly. The other two methods that more carefully controlled gradients---$\ell_2$-Noise and $\ell_2$-Clip---provided the next best gains to HL-Gaussian.

\section{Conclusion}

We introduced a novel loss for regression, called the Histogram Loss (HL), that explicitly constructs a distribution over targets to predict, rather than directly estimating the mean of the target conditioned on inputs.
The loss involves minimizing the KL-divergence between a predicted distribution and this target distribution.
To make this loss efficient to compute, without significantly reducing modeling power, we restrict the class of approximation densities to histogram densities.
We highlight that for a particular setting of the HL---with a target Gaussian distribution---the norm of the gradient does not grow large or vary widely. Combined with recent results that show reducing training steps for stochastic gradient results in improved generalization provide some theoretical justification for why we observe such strong performance of HL-Gaussian in practice. 
We conduct a series of experiments to identify this gain, with evidence that the main role is not due to overfitting or an improved representation, but rather due to the fact that the HL can be optimized in a smaller number of steps, with smoother gradients. 

The introduction of the HL provides several avenues to improve our choice of loss function. One direction is to more explicitly take advantage of the specification of the target distribution. 
In this work, we considered this loss only for a fixed set of bins, widths and variance parameter $\sigma$ for the target distribution. To be more agnostic to these choices, we demonstrated performance across possible parameter settings. However, these parameters could be determined using meta-parameter optimization strategies, such as cross validation, or even learning strategies with particular objectives for these parameters. 
The key property to make the HL easy to specify and optimize was the use of a histogram to predict the target; the derivation does not prevent also optimizing the bins centers, widths and variances. 

Overall, this work provides some unification of recent results using soft targets, through the introduction of the HL. We hope for it to facilitate  discussion and development on the design of losses that promote learning, and direct further investigation into the importance of the optimization properties of these losses. 

\section*{Acknowledgments}
We would like to thank Alberta Innovates for funding AMII (the Alberta Machine Intelligence Institute) and this research. 

\bibliographystyle{icml2018}
\bibliography{sample.bib}

\newpage
\clearpage

\appendix
 

\section{Explicit gradient computations} \label{app_calculations}

Let $b_i = \phivec_\thetavec(\xvec)^\top \wvec_i$ and $e_i = \exp(b_i)$. Then, since $f_j(\xvec) =  \frac{e_j}{\sum_{l=1}^\nbins e_l}$, for $j \neq i$
\begin{align*}
\frac{\partial}{\partial b_i} f_j(\xvec) 
&= \frac{\partial}{\partial b_i} \frac{e_j}{\sum_{l=1}^\nbins e_l} 
= -\frac{e_j}{\left(\sum_{l=1}^\nbins e_l\right)^2} e_i   \\
&= - f_j(\xvec) f_i(\xvec) 
\end{align*}
For $j = i$, we get
\begin{align*}
\frac{\partial}{\partial b_i} f_j(\xvec) 
&= \frac{e_i}{\sum_{l=1}^\nbins e_l} - \frac{e_i}{\left(\sum_{l=1}^\nbins e_l\right)^2} e_i   \\
&= f_i(\xvec)[1 - f_i(\xvec)]  
\end{align*}
Consider now the gradient of the HL, w.r.t $b_i$
\begin{align*}
\frac{\partial}{\partial b_i} \sum_{j=1}^\nbins \distweight_j \log f_j(\xvec) 
&= \sum_{j=1}^\nbins \distweight_j \frac{1}{f_j(\xvec)} f_j(\xvec) (1_{i = j} - f_i(\xvec))  \\
&= \sum_{j=1}^\nbins \distweight_j (1_{i = j} - f_i(\xvec) )\\ 
&=  \distweight_i - f_i(\xvec) \sum_{i=1}^\nbins \distweight_i \\ 
&=  \distweight_i - f_i(\xvec)
\end{align*}
Then
\begin{align*}
\frac{\partial}{\partial \wvec_i} \sum_{j=1}^\nbins \distweight_i \log f_i(\xvec) 
&=  \left(\distweight_i - f_i(\xvec) \right) \phivec_\thetavec(\xvec)\\
\frac{\partial}{\partial \thetavec} \sum_{j=1}^\nbins \distweight_i \log f_i(\xvec) 
&=  \sum_{i=1}^\nbins  \left(\distweight_i - f_i(\xvec) \right) \nabla \wvec_i^\top \phivec_\thetavec(\xvec)
\end{align*}
where $J \phivec_\thetavec(\xvec)$ is the Jacobian of $\phivec_\thetavec$.

\begin{table}[h]
\begin{center}
\begin{tabular}{|c c c c c|}
 \hline
 Dataset & \# train & \# test & \# feats & $Y$ range \\ [0.5ex] 
 \hline\hline
 Song Year & 463715 & 51630 & 90 & [1922,2011] \\ 
 \hline
 CT Position & 42800 & 10700 & 385 & [0,100] \\ [0ex] 
 \hline
 Bike Sharing & 13911 & 3478 & 16 & [0,1000] \\ [0ex] 
 \hline
\end{tabular}
\caption{Overview of the datasets used in the experiments.}\label{tab:Dataset Table}
\end{center}
\vspace{-0.5cm}
\end{table}

\section{Dataset details.}\label{app_datasets}

We include an overview of the datasets in Table \ref{tab:Dataset Table}. We additionally show a histogram of their targets, in Figure \ref{fig:histograms}. 

\section{Additional experiments}\label{app_tables} 

We provide overall performance results for the two other datasets. We include the learning curves on test data for CT Position, corresponding to Figure \ref{fig:lossplots} in the main text. 

\subsection{Test Learning Curves for CT Position dataset}\label{app_results}

We include additional graphs for the variability in the RMSE and MAE, for the three objectives HL-Gaussian, HL-OneBin and $\ell_2$, for test data in Figure \ref{fig:lossplots}.

\begin{figure}
    \centering
    \begin{minipage}{0.4\textwidth}
        \centering
        \includegraphics[width=0.9\textwidth]{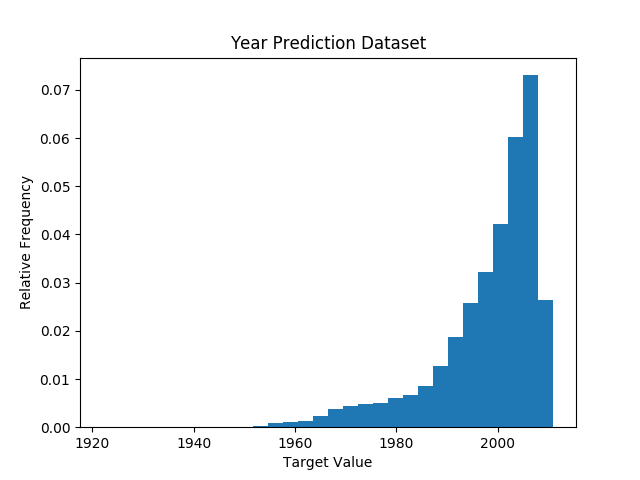}
    \end{minipage}\hfill
    \begin{minipage}{0.4\textwidth}
        \centering
        \includegraphics[width=0.9\textwidth]{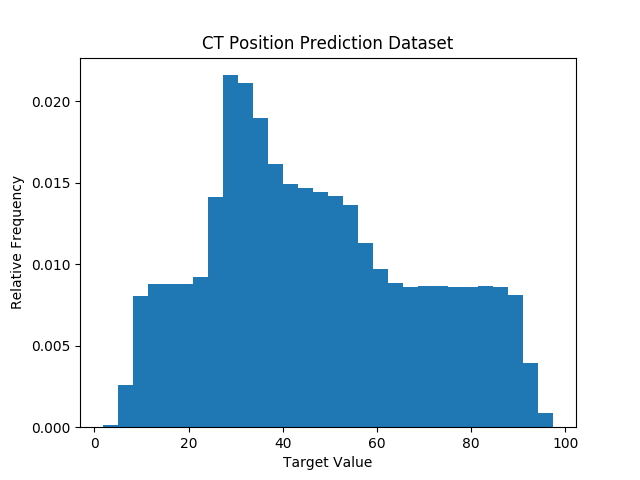}
    \end{minipage}
    \begin{minipage}{0.4\textwidth}
       	\centering
       	\includegraphics[width=0.9\textwidth]{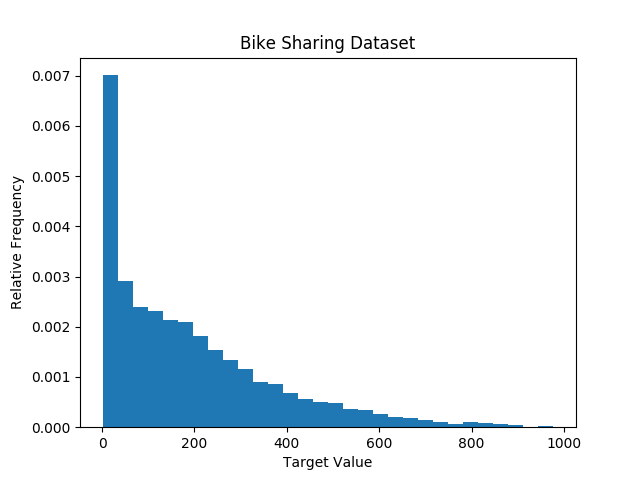}
    \end{minipage}
    \caption{Histogram of the target values for the three datasets. }
    \label{fig:histograms}
\end{figure}

\begin{figure}
\centering
 \begin{subfigure}[b]{\figwidthfour}
         \includegraphics[width=\textwidth]{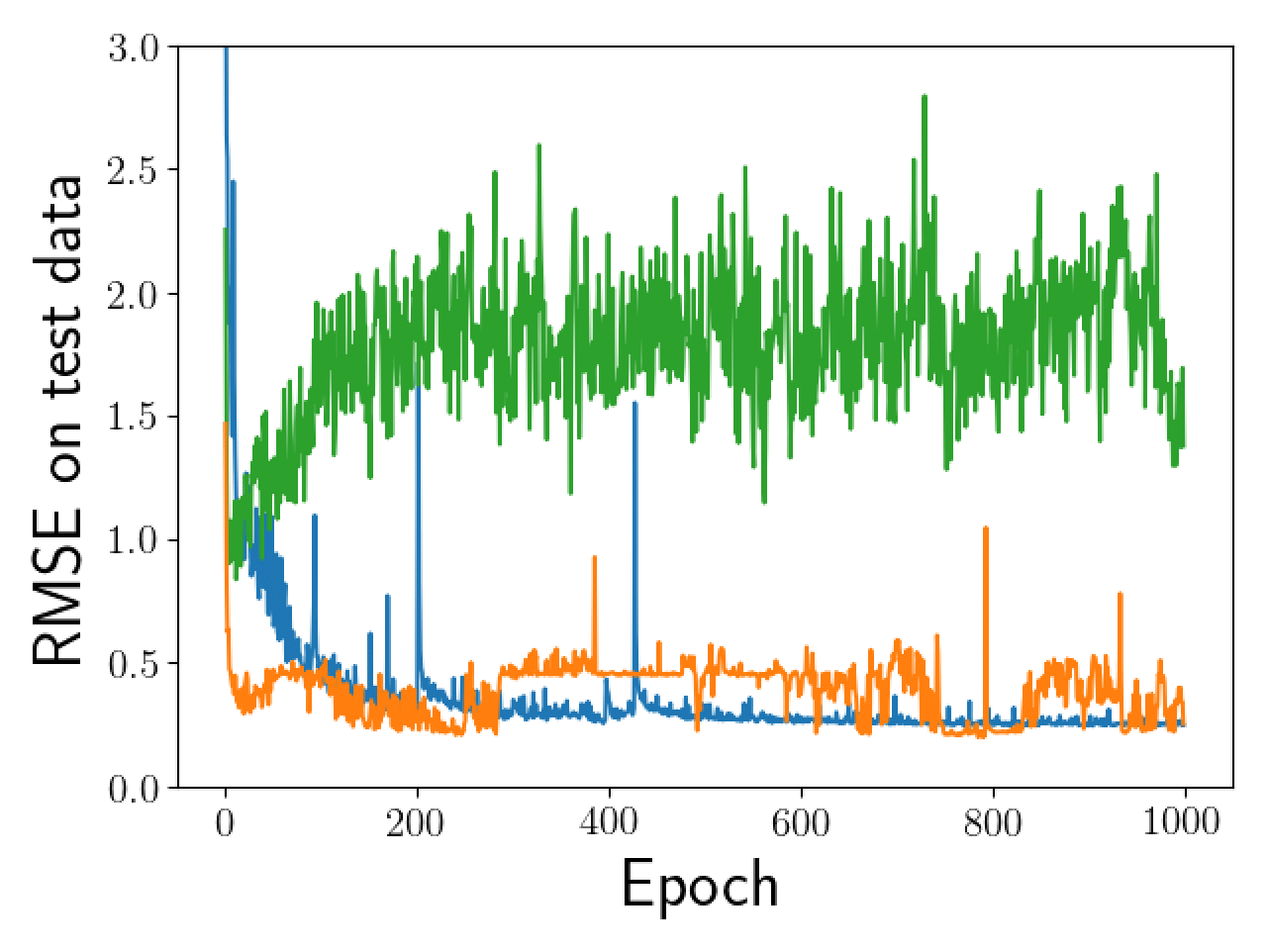}
         \caption{RMSE on testing data, vs. epochs}
         \label{fig:rmse-sigma}
 \end{subfigure}   
  \begin{subfigure}[b]{\figwidthfour}
         \includegraphics[width=\textwidth]{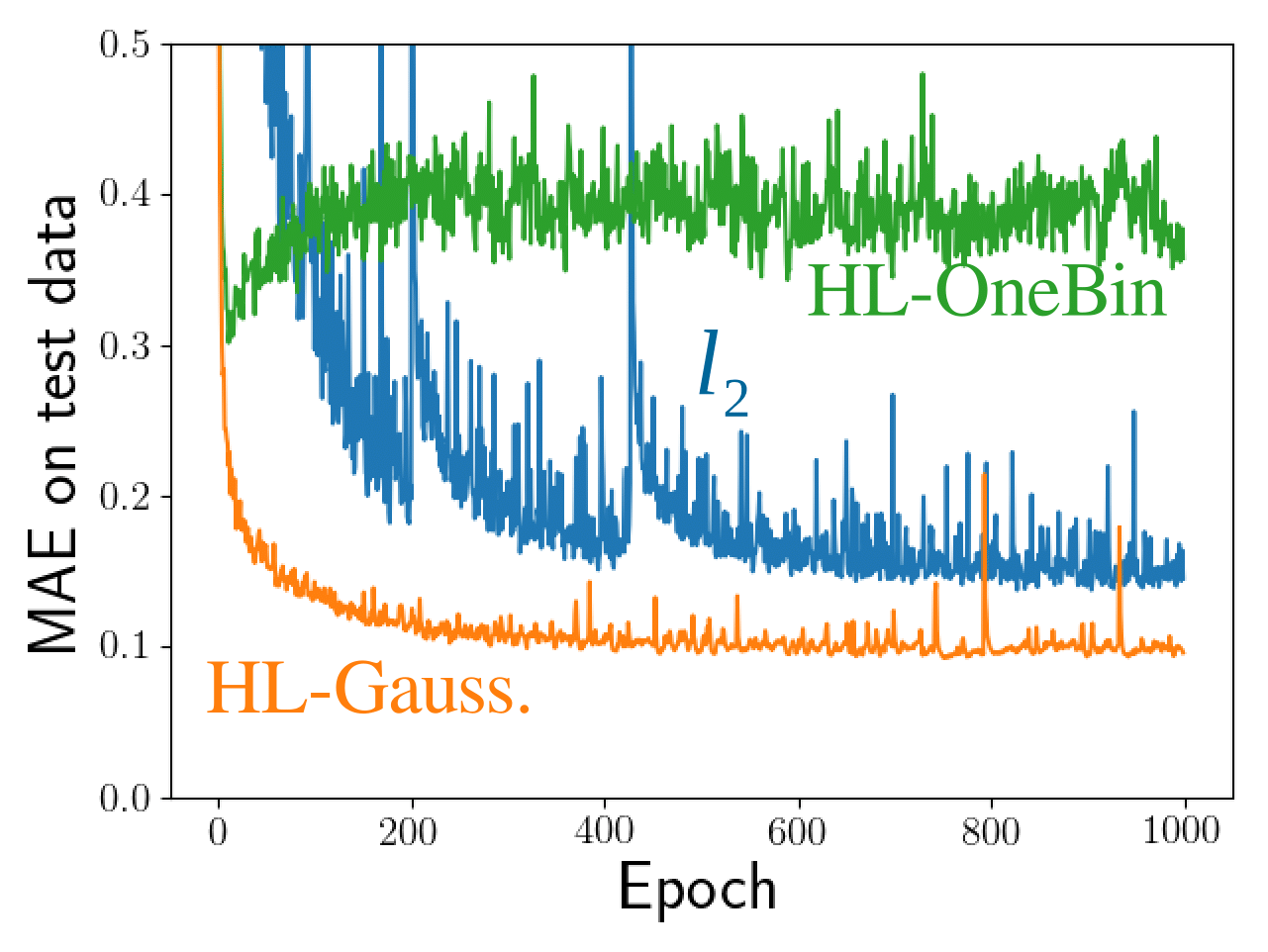}
         \caption{MAE on testing data, vs. epochs}
         \label{fig:mae-sigma}
 \end{subfigure} 
\caption{The error values for the testing data on the CT Position dataset, for three training objectives. }
\label{fig:lossplots}
\end{figure}

\begin{table*}
	\centering
	\setlength\tabcolsep{.7pt}
	\begin{tabular}{lllllll}
		\hline
		Method             & Train objective   & Train MAE   & Train RMSE   & Test objective   & Test MAE   & Test RMSE   \\
		\hline
		Linear Reg.  & $ 9114.456$ {\scriptsize $(\pm 6.524)$} & $ 679.285$ {\scriptsize $(\pm 0.264)$} & $ 954.696$ {\scriptsize $(\pm 0.342)$} & $ 9129.131$ {\scriptsize $(\pm 26.215)$} & $ 679.646$ {\scriptsize $(\pm 0.741)$} & $ 955.461$ {\scriptsize $(\pm 1.370)$}  \\
		$\ell_2$         & $ 0.900$ {\scriptsize $(\pm 0.001)$}    & $ 575.997$ {\scriptsize $(\pm 2.171)$} & $ 813.868$ {\scriptsize $(\pm 0.825)$} & $ 0.955$ {\scriptsize $(\pm 0.004)$}     & $ 602.393$ {\scriptsize $(\pm 2.026)$} & $ 869.569$ {\scriptsize $(\pm 1.923)$}  \\
		HL-Gaussian & $ 320.475$ {\scriptsize $(\pm 0.039)$}  & $ 580.305$ {\scriptsize $(\pm 0.723)$} & $ 846.072$ {\scriptsize $(\pm 0.282)$} & $ 320.260$ {\scriptsize $(\pm 0.052)$}   & $ 591.304$ {\scriptsize $(\pm 1.413)$} & $ 862.656$ {\scriptsize $(\pm 1.683)$}  \\
		\hline
		HL-OneBin   & $ 304.411$ {\scriptsize $(\pm 0.063)$}  & $ 581.230$ {\scriptsize $(\pm 0.967)$} & $ 848.745$ {\scriptsize $(\pm 0.284)$} & $ 304.827$ {\scriptsize $(\pm 0.091)$}   & $ 590.823$ {\scriptsize $(\pm 1.589)$} & $ 863.475$ {\scriptsize $(\pm 1.621)$}  \\
		$\ell_2$+Softmax    & $ 0.895$ {\scriptsize $(\pm 0.003)$}    & $ 569.797$ {\scriptsize $(\pm 3.613)$} & $ 809.113$ {\scriptsize $(\pm 1.679)$} & $ 0.962$ {\scriptsize $(\pm 0.003)$}     & $ 600.607$ {\scriptsize $(\pm 3.064)$} & $ 872.765$ {\scriptsize $(\pm 1.477)$} \\
		\hline
	\end{tabular}
	\vspace{-0.2cm}
	\caption{Performance on the Song Year dataset.  All the numbers are multiplied by $10^2$.}
	\label{tab:yearpred}
	\vspace{-0.2cm}
\end{table*}

\begin{table*}[t]
  \centering
\begin{tabular}{lllllll}
	\hline
	Method             & Train objective   & Train MAE   & Train RMSE   & Test objective   & Test MAE   & Test RMSE   \\
	\hline
	Linear Reg.  & $ 9125.643$               & $ 679.557$     & $ 955.282$      & $ 9044.316$           & $ 680.050$ & $ 951.016$  \\
	$\ell_2$         & $ 0.904$        & $ 574.110$     & $ 817.002$      & $ 0.997$    & $ 610.344$ & $ 888.808$  \\
	HL-Gaussian & $ 320.334$      & $ 576.914$     & $ 844.565$      & $ 322.028$  & $ 598.490$ & $ 875.857$  \\
	\hline
	HL-OneBin   & $ 304.389$      & $ 581.066$     & $ 848.344$      & $ 307.988$  & $ 601.927$ & $ 879.877$  \\
	\hline
\end{tabular}
  \vspace{-0.2cm}
  \caption{Performance on the Song Year dataset with the authors' suggested train and test splits.  All the numbers are multiplied by $10^2$.}
  \label{tab:yearpred_fixed}
  \vspace{-0.2cm}
\end{table*}

\begin{table*}[ht!]
	\centering
    \setlength\tabcolsep{.7pt}
	\begin{tabular}{lllllll}
		\hline
		Method             & Train objective   & Train MAE   & Train RMSE   & Test objective   & Test MAE   & Test RMSE   \\
		\hline
		 $\ell_2$        & $ 0.04$ {\scriptsize $(\pm 0.00)$}          & $ 1343.51$ {\scriptsize $(\pm 41.46)$}  & $ 1865.55$ {\scriptsize $(\pm 60.99)$}  & $ 0.22$ {\scriptsize $(\pm 0.00)$}           & $ 2899.84$ {\scriptsize $(\pm 52.44)$}  & $ 4601.21$ {\scriptsize $(\pm 80.54)$}  \\
		 HL-Gaussian       & $ 212.43$ {\scriptsize $(\pm 1.12)$}        & $ 1667.68$ {\scriptsize $(\pm 30.12)$}  & $ 2645.08$ {\scriptsize $(\pm 41.10)$}  & $ 254.60$ {\scriptsize $(\pm 1.01)$}         & $ 2495.21$ {\scriptsize $(\pm 13.28)$}  & $ 4182.06$ {\scriptsize $(\pm 57.35)$}  \\
		 \hline
		 HL-OneBin  & $ 169.62$ {\scriptsize $(\pm 3.53)$}        & $ 1768.68$ {\scriptsize $(\pm 39.17)$}  & $ 2871.14$ {\scriptsize $(\pm 56.85)$}  & $ 287.13$ {\scriptsize $(\pm 5.80)$}         & $ 2607.38$ {\scriptsize $(\pm 15.94)$}  & $ 4373.20$ {\scriptsize $(\pm 53.96)$}  \\
		 $\ell_2$+Softmax   & $ 0.07$ {\scriptsize $(\pm 0.01)$}          & $ 1750.31$ {\scriptsize $(\pm 127.58)$} & $ 2463.93$ {\scriptsize $(\pm 150.76)$} & $ 0.22$ {\scriptsize $(\pm 0.00)$}           & $ 2886.93$ {\scriptsize $(\pm 70.80)$}  & $ 4544.68$ {\scriptsize $(\pm 41.25)$}  \\
		 \hline
	\end{tabular}
	\vspace{-0.2cm}
	\caption{Performance on the Bike Sharing dataset. All the numbers are multiplied by $10^2$. }
	\label{tab:bike_narrow}
	\vspace{-0.2cm}
\end{table*}

\subsection{Experiments on Song Year dataset}

For the Song Year dataset, we include results both for random train-test splits and report results for the fixed train/test split recommended by the authors of the dataset to avoid the effect of an artist having songs in both the train and test sets. 

For this dataset, both HL-Gaussian and HL-OneBin outperform $\ell_2$ only slightly, and perform similarly to each other. The $\ell_2$ loss with a nonlinear softmax layer also performs about the same, suggesting the main (small) gain for this dataset is from this nonlinearity. This further suggests that the $\ell_2$ is likely a suitable loss for this problem, and there is little to gain for switching to the HL. There is a slightly larger gain for HL-Gaussian in Table \ref{tab:yearpred_fixed} for the training/test split suggested by the authors of this data, but still not nearly as large as CT Position or Bike Sharing. 

\subsection{Experiments on Bike Sharing dataset}

We provide a comparison of performance on the Bike Sharing dataset in Table \ref{tab:bike_narrow}. We used early stopping to avoid overfitting, because on this dataset, dropout was ineffective.

The network for Bike Sharing uses four hidden layers of width 64, but we additionally tested a network architecture with four hidden layers of width 512. For this wider network, $\ell_2$ was able to get better final TEST MAE and Test RMSE performance of  $2402.67$  and $4014.31$ respectively. However, performance was quite a bit more variable during learning---likely due to the overparameterization. Future work is to better understand the effect of different network architectures on the performance of the different losses.

\end{document}